\newcolumntype{L}[1]{>{\raggedright\let\newline\\\arraybackslash\hspace{0pt}}m{#1}}
\newcolumntype{C}[1]{>{\centering\let\newline\\\arraybackslash\hspace{0pt}}m{#1}}
\newcolumntype{R}[1]{>{\raggedleft\let\newline\\\arraybackslash\hspace{0pt}}m{#1}}
\newcommand{\specialcellbold}[2][c]{%
  \bfseries
  \begin{tabular}[#1]{@{}l@{}}#2\end{tabular}%
}
\ifcvprfinal\pagestyle{empty}\fi
\newtheorem*{claim}{Claim}
\begin{document}


\title{Locating Objects Without Bounding Boxes}

\author{Javier Ribera, David G\"{u}era, Yuhao Chen, Edward J. Delp\\
    {Video and Image Processing Laboratory (VIPER), Purdue University} \\
}

\maketitle
\ifcvprfinal\thispagestyle{empty}\fi

\begin{abstract}
    Recent advances in convolutional neural networks (CNN) have achieved remarkable results in locating objects in images.
    In these networks, the training procedure usually requires providing bounding boxes or the maximum number of expected objects.
    In this paper, we address the task of estimating object locations without annotated bounding boxes which are typically hand-drawn and time consuming to label.
    We propose a loss function that can be used in any fully convolutional network (FCN) to estimate object locations.
    This loss function is a modification of the average Hausdorff distance between two unordered sets of points.
    The proposed method has no notion of bounding boxes, region proposals, or sliding windows.
    We evaluate our method with three datasets designed to locate people's heads, pupil centers and plant centers.
    We outperform state-of-the-art generic object detectors and methods fine-tuned for pupil tracking.
\end{abstract}

\section{Introduction}

Locating objects in images is an important task in computer vision.
A common approach in object detection is to obtain bounding boxes around the objects of interest.
In this paper, we are not interested in obtaining bounding boxes.
Instead, we define the object localization task as obtaining a single 2D coordinate corresponding to the location of each object.
The location of an object can be any key point we are interested in, such as its center.
Figure~\ref{fig:collage} shows an example of localized objects in images.
Differently from other keypoint detection problems, we do not know in advance the number of keypoints in the image.
To also make the method as generic as possible we do not assume any physical constraint between the points, unlike in cases such as pose estimation.
This definition of object localization is more appropriate for applications where objects are very small, or substantially overlap (see the overlapping plants in Figure~\ref{fig:collage}).
In these cases, bounding boxes may not be provided by the dataset or they may be infeasible to groundtruth.

Bounding-box annotation is tedious, time-consuming and expensive~\cite{Papadopoulos2015}.
For example, annotating ImageNet~\cite{imagenet} required 42 seconds per bounding box when crowdsourcing on Amazon's Mechanical Turk using a technique specifically developed for efficient bounding box annotation~\cite{su2012}.
In \cite{bell2015}, Bell \etal~introduce a new dataset for material recognition and segmentation.
By collecting click location labels in this dataset instead of a full per-pixel segmentation, they reduce the annotation costs an order of magnitude.

\begin{figure}[t]
\begin{center}
   \includegraphics[width=0.5\textwidth]{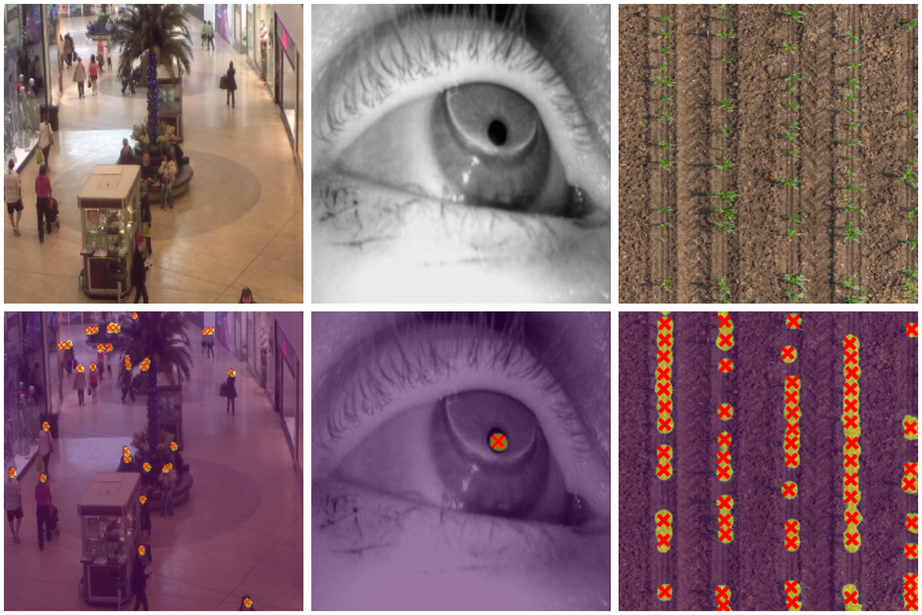}
\end{center}
    \caption{Object localization with human heads, eye pupils and plant centers.
             (Bottom) Heat map and estimations as crosses.}
\label{fig:collage}
\end{figure}

In this paper, we propose a modification of the average Hausdorff distance as a loss function of a CNN to estimate the location of objects.
Our method does not require the use of bounding boxes in the training stage, and does not require to know the maximum number of objects when designing the network architecture.
For simplicity, we describe our method only for a single class of objects, although it can trivially be extended to multiple object classes.
Our method is object-agnostic, thus the discussion in this paper does not include any information about the object characteristics.
Our approach maps input images to a set of coordinates, and we validate it with diverse types of objects.
We evaluate our method with three datasets.
One dataset contains images acquired from a surveillance camera in a shopping mall, and we locate the heads of people.
The second dataset contains images of human eyes, and we locate the center of the pupil.
The third dataset contains aerial images of a crop field taken from an Unmanned Aerial Vehicle (UAV), and we locate the centers of highly occluded plants.

Our approach to object localization via keypoint detection is not a universal drop-in replacement for bounding box detection,
specially for those tasks that inherently require bounding boxes, such as automated cropping.
Also, a limitation of this approach is that bounding box labeling incorporates some sense of scale, while keypoints do not.

The contributions of our work are:
\begin{itemize}
    \item We propose a loss function for object localization, which we name \textit{weighted Hausdorff distance} (WHD), that overcomes the limitations of pixelwise losses such as $L^2$ and the Hausdorff distances.
    \item We develop a method to estimate the location and number of objects in an image, without any notion of bounding boxes or region proposals.
    \item We formulate the object localization problem as the minimization of distances between points, independently of the model used in the estimation.
          This allows to use any fully convolutional network architectural design.
    \item We outperform state-of-the-art generic object detectors and achieve comparable results with crowd counting methods without any domain-specific knowledge, data augmentation, or transfer learning.
\end{itemize}

\section{Related Work}
\textbf{Generic object detectors.}
Recent advances in deep learning~\cite{deeplearningbook,nature} have increased the accuracy of localization tasks such as object or keypoint detection.
By generic object detectors, we mean methods that can be trained to detect any object type or types, such as Faster-RCNN~\cite{fastrcnn}, Single Shot MultiBox Detector (SSD)~\cite{ssd}, or YOLO~\cite{redmon2016you}.
In Fast R-CNN, candidate regions or proposals are generated by classical methods such as selective search~\cite{selectivesearch}.
Although activations of the network are shared between region proposals, the system cannot be trained end-to-end.
Region Proposal Networks (RPNs) in object detectors such as Faster R-CNN~\cite{fastrcnn,fasterrcnn} allow for end-to-end training of models.
Mask R-CNN~\cite{maskrcnn} extends Faster R-CNN by adding a branch for predicting an object mask but it runs in parallel with the existing branch for bounding box recognition.
Mask R-CNN can estimate human pose keypoints by generating a segmentation mask with a single class indicating the presence of the keypoint.
The loss function in Mask R-CNN is used location by location, making the keypoint detection highly sensitive to alignment of the segmentation mask.
SDD provides fixed-sized bounding boxes and scores indicating the presence of an object in the boxes.
The described methods either require groundtruthed bounding boxes to train the CNNs or require to set the maximum number of objects in the image being analyzed.
In \cite{huang2017}, it is observed that generic object detectors such as Faster R-CNN and SSD perform very poorly for small objects.

\textbf{Counting and locating objects.}
Counting the number of objects in an image is not a trivial task.
In~\cite{lempitsky_2010}, Lempitsky \etal~estimate a density function whose integral corresponds to the object count.
In~\cite{shao_2015}, Shao \etal~proposed two methods for locating objects.
One method first counts and then locates, and the other first locates and then counts.

Locating and counting people is necessary for many applications such as crowd monitoring in surveillance systems, surveys for new businesses, and emergency management \cite{lempitsky_2010,xiong2017}.
There are multiple studies in the literature, where people in videos of crowds are detected and tracked~\cite{track_by_detection2008,breitenstein2011}.
These detection methods often use bounding boxes around each human as ground truth.
Acquiring bounding boxes for each person in a crowd can be labor intensive and imprecise under conditions where lots of people overlap, such as sports events or rush-hour agglomerations in public transport stations.
More modern approaches avoid the need of bounding boxes by estimating a density map whose integral yields the total crowd count.
In approaches that involve a density map, the label of the density map is constructed from the labels of the people's heads.
This is typically done by centering Gaussian kernels at the location of each head.
Zhang \etal~\cite{zhang2016} estimate the density image using a multi-column CNN that learns features at different scales.
In \cite{sam2017}, Sam \etal~use multiple independent CNNs to predict the density map at different crowd densities.
An additional CNN classifies the density of the crowd scene and relays the input image to the appropriate CNN.
Huang \etal~\cite{huang2018} propose to incorporate information about the body part structure to the conventional density map to reformulate the crowd counting as a multi-task problem.
Other works such as Zhang \etal~\cite{zhang2015} use additional information such as the groundtruthed perspective map.

Methods for pupil tracking and precision agriculture are usually domain-specific.
In pupil tracking, the center of the pupil must be resolved in images obtained in real-world illumination conditions~\cite{fuhl2015}.
A wide range of applications, from commercial applications such as video games~\cite{eyetracking_book}, driving~\cite{applegan,copilot} or microsurgery~\cite{fuhl2016_2} rely on accurate pupil tracking.
In remote precision agriculture, it is critical to locate the center of plants in a crop field.
Agronomists use plant traits such as plant spacing to predict future crop yield~\cite{thornley_1983,sui_2011,tokatlidis_2004,farnham2001,chauhan2011}, and plant scientists to breed new plant varieties~\cite{araus_2014,neilson_2015}.
In~\cite{aich2018}, Aich \etal count wheat plants by first segmenting plant regions and then counting the number of plants in each segmented patch.

\textbf{Hausdorff distance.}
The Hausdorff distance can be used to measure the distance between two sets of points~\cite{attouch1991}.
Modifications of the Hausdorff distance~\cite{dubuisson1994} have been used for various multiple tasks, including character recognition~\cite{lu2001}, face recognition~\cite{lin2003} and scene matching~\cite{lin2003}.
Schutze \etal~\cite{schutze2012} use the average Hausdorff distance to evaluate solutions in multi-objective optimization problems.
In~\cite{elkhiyari2017}, Elkhiyari \etal~compare features extracted by a CNN according to multiple variants of the Hausdorff distance for the task of face recognition.
In~\cite{fan2017}, Fan \etal~use the Chamfer and Earth Mover's distance, along with a new neural network architecture, for 3D object reconstruction by estimating the location of a fixed number of points.
The Hausdorff distance is also a common metric to evaluate the quality of segmentation boundaries in the medical imaging community~\cite{taha2015,zhou2017,liao2013,teikari2016}.

\section{The Average Hausdorff Distance}
Our work is based on the Hausdorff distance which we briefly review in this section.
Consider two unordered non-empty sets of points $X$ and $Y$ and a distance metric $d(x, y)$ between two points $x \in X$ and $y \in Y$.
The function $d(\cdot, \cdot)$ could be any metric.
In our case we use the Euclidean distance.
The sets $X$ and $Y$ may have different number of points.
Let $\Omega \subset \mathbb{R}^2$ be the space of all possible points.
In its general form, the Hausdorff distance between $X \subset \Omega$ and $Y \subset \Omega$ is defined as
\begin{equation}
  \label{eq:hausdorff}
  d_{\text{H}}(X, Y) = \max \left\{ \sup_{x\in X} \inf_{y\in Y} d(x, y), \sup_{y\in Y} \inf_{x\in X} d(x, y) \right\}  .
\end{equation}

When considering a discretized and bounded $\Omega$, such as all the possible pixel coordinates in an image, the suprema and infima are achievable and become maxima and minima, respectively.
This bounds the Hausdorff distance as
\begin{equation}
  \label{eq:dmax}
  d(X, Y) \le d_{max} = \max_{x\in \Omega, y \in \Omega} d(x, y)  ,
\end{equation}
which corresponds to the diagonal of the image when using the Euclidean distance.
As shown in~\cite{attouch1991}, the Hausdorff distance is a metric.
Thus $\forall X, Y, Z \subset \Omega$ we have the following properties:
\begin{subequations}
\label{all1}
 \begin{align}
  \label{eq:hausdorff1}
     d_H(X, Y) & \ge 0 \quad   \\
  \label{eq:hausdorff2}
  d_H(X, Y) &= 0 \iff X = Y  \\
  \label{eq:hausdorff3}
  d_H(X, Y) &= d_H(Y, X)  \\
  \label{eq:hausdorff4}
  d_H(X, Y) &\le d_H(X, Z) + d_H(Z, Y)
 \end{align}
\end{subequations}

Equation~\eqref{eq:hausdorff2} follows from $X$ and $Y$ being closed, because in our task the pixel coordinate space $\Omega$ is discretized.
These properties are very desirable when designing a function to measure how similar $X$ and $Y$ are~\cite{arkin1991}.

\begin{figure}[t]
  \centering
  \includegraphics[width=0.3\textwidth]{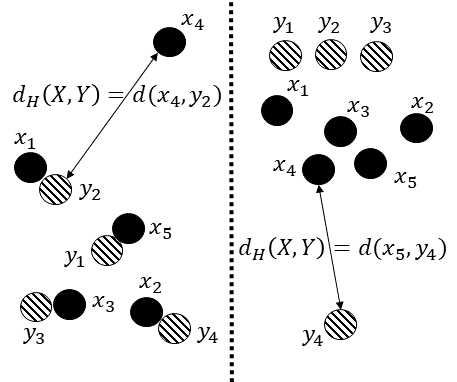}%

  \caption{Illustration of two different configurations of point sets $X=\{x_1, ..., x_5\}$ (solid dots) and $Y=\{y_1, ..., y_4\}$ (dashed dots).
           Despite the clear difference in the distances between points, their Hausdorff distance are equal because the worst outlier is the same.}
  \label{fig:points}
\end{figure}

A shortcoming of the Hausdorff function is its high sensitivity to outliers~\cite{schutze2012,taha2015}.
Figure~\ref{fig:points} shows an example for two finite sets of points with one outlier.
To avoid this, the average Hausdorff distance is more commonly used:
\begin{equation}
  \label{eq:AH}
  d_{\text{AH}}(X, Y) = \frac{1}{|X|} \sum_{x\in X} \min_{y\in Y} d(x, y) + \frac{1}{|Y|} \sum_{y\in Y} \min_{x\in X} d(x, y) ,
\end{equation}
where $|X|$ and $|Y|$ are the number of points in $X$ and $Y$, respectively.
Note that properties~\eqref{eq:hausdorff1},~\eqref{eq:hausdorff2} and ~\eqref{eq:hausdorff3} are still true, but~\eqref{eq:hausdorff4} is not.
Also, the average Hausdorff distance is differentiable with respect to any point in $X$ or $Y$.

Let $Y$ contain the ground truth pixel coordinates, and $X$ be our estimation.
Ideally, we would like to use $d_{\text{AH}}(X, Y)$ as the loss function during the training of our convolutional neural network (CNN).
We find two limitations when incorporating the average Hausdorff distance as a loss function.
First, CNNs with linear layers implicitly determine the estimated number of points $|X|$ as the size of the last layer.
This is a drawback because the actual number of points depends on the content of the image itself.
Second, FCNs such as U-Net~\cite{unet} can indicate the presence of an object center with a higher activation in the output layer, but they do not return the pixel coordinates.
In order to learn with backpropagation, the loss function must be differentiable with respect to the network output.

\section{The Weighted Hausdorff Distance}
\label{sec:whd}
To overcome these two limitations, we modify the average Hausdorff distance as follows:

\begin{equation}
  \label{eq:WH}
    \begin{split}
        d_{\text{WH}}(p, Y) = &\frac{1}{\mathcal{S}+ \epsilon} \sum_{x\in \Omega} p_x \min_{y\in Y} d(x, y) + \\
        & \frac{1}{|Y|} \sum_{y\in Y} \underset{x\in \Omega}{M_\alpha} \left[\> p_x d(x, y) + (1 - p_x) d_{max} \, \right],
    \end{split}
\end{equation}
where
\begin{equation}
  \label{eq:denom}
    \mathcal{S} = \sum_{x\in \Omega} p_x ,
\end{equation}

\begin{equation}
  \label{eq:genmean}
      \underset{a\in A}{M_\alpha} \left[ f(a) \right] = \left( \frac{1}{|A|} \sum_{a\in A} f^\alpha(a) \right) ^\frac{1}{\alpha} ,
\end{equation}

is the generalized mean, and $\epsilon$ is set to $10^{-6}$.
We call $d_{\text{WH}}(p, Y)$ the weighted Hausdorff distance (WHD).
$p_x \in [0, 1]$ is the single-valued output of the network at pixel coordinate $x$.
The last activation of the network can be bounded between zero and one by using a sigmoid non-linearity.
Note that $p$ does not need to be normalized, i.e., $\sum_{x\in \Omega} p_x = 1$ is not necessary.
Note that the generalized mean $M_\alpha \left[ \cdot \right]$ corresponds to the minimum function when $\alpha=-\infty$.
We justify the modifications applied to Equation~\eqref{eq:AH} to obtain Equation~\eqref{eq:WH} as follows:
\begin{enumerate}
  \item The $\epsilon$ in the denominator of the first term provides numerical stability when $p_x \approx 0 ~ \forall x \in \Omega$.
  \item When $p_x = \{0, 1\}$, $\alpha=-\infty$, and $\epsilon = 0$, the weighted Hausdorff distance becomes the average Hausdorff distance.
        We can interpret this as the network indicating with complete certainty where the object centers are.
        As $d_{\text{WH}}(p, Y) \geq 0$, the global minimum ($d_{\text{WH}}(p, Y) = 0$) corresponds to $p_x = 1$ if $x\in Y$ and 0 otherwise.
  \item In the first term, we multiply by $p_x$ to penalize high activations in areas of the image where there is no ground truth point $y$ nearby.
        In other words, the loss function penalizes estimated points that should not be there.
  \item In the second term, by using the expression \\ $f(\cdot) \vcentcolon = p_x d(x, y) + (1 - p_x) d_{max}$ we enforce that
      \begin{enumerate}
          \item If $p_{x_0} \approx 1$, then $f(\cdot) \approx d(x_0, y)$. This means the point $x_0$ will contribute to the loss as in the AHD (Equation~\eqref{eq:AH}).
          \item If $p_{x_0} \approx 0$, $x_0 \neq y$, then $f(\cdot) \approx d_{max}$.
              Then, if $\alpha = -\infty$, the point $x_0$ will not contribute to the loss because the ``minimum'' $M_{x\in \Omega}[\thinspace\cdot\thinspace]$ will ignore $x_0$.
              If another point $x_1$ closer to $y$ with $p_{x_1} > 0$ exists, $x_1$ will be ``selected'' instead by $M[\thinspace\cdot\thinspace]$.
              Otherwise $M_{x\in \Omega}[\thinspace\cdot\thinspace]$ will be high.
              This means that low activations around ground truth points will be penalized.
      \end{enumerate}
      Note that $f(\cdot)$ is not the only expression that would enforce these two constraints ($f|_{p_x = 1} = d(x, y)$ and $f|_{p_x = 0} = d_{max}$).
      We chose a linear function because of its simplicity and numerical stability.
\end{enumerate}

Both terms in the WHD are necessary.
If the first term is removed, then the trivial solution is $p_x = 1 \quad \forall x \in \Omega$.
If the second term is removed, then the trivial solution is $p_x = 0 \quad \forall x \in \Omega$.
These two cases hold for any value of $\alpha$ and the proof can be found in the appendix.
Ideally, the parameter $\alpha \to -\infty$ so that $M_\alpha(\cdot) = ||\cdot||_{-\infty}$ becomes the minimum operator \cite{minimum}.
However, this would make the second term flat with respect to the output of the network.
For a given $y$, changes in $p_{x_0}$ in a point $x_0$ that is far from $y$ would be ignored by $M_{-\infty}(\cdot)$, if there is another point $x_1$ with high activation and closer to $y$.
In practice, this makes training difficult because the minimum is not a smooth function with respect to its inputs.
Thus, we approximate the minimum with the generalized mean $M_\alpha(\cdot)$, with $\alpha<0$.
The more negative $\alpha$ is, the more similar to the AHD the WHD becomes, at the expense of becoming less smooth.
In our experiments, $\alpha=-1$.
There is no need to use $M_\alpha(\cdot)$ in the first term
because $p_x$ is not inside the minimum, thus the term is already differentiable with respect to $p$.

If the input image needs to be resized to be fed into the network, we can normalize the WHD to account for this distortion.
Denote the original image size as $(S_o^{(1)}, S_o^{(2)})$ and the resized image size as $(S_r^{(1)}, S_r^{(2)})$.
In Equation~\eqref{eq:WH}, we compute distances in the original pixel space by replacing $d(x, y)$ with $d(\textbf{S} x, \textbf{S} y)$, where $x, y \in \Omega$ and 
\begin{equation}
  \label{eq:normaliz}
    \textbf{S} = 
        \begin{pmatrix}
            S_o^{(1)} / S_r^{(1)}   &     0     \\
            0                             &     S_o^{(2)} / S_r^{(2)}  \\
        \end{pmatrix} .
\end{equation}

\subsection{Advantage Over Pixelwise Losses}
A naive alternative is to use a one-hot map as label, defined as $l_x = 1$ for $x \in Y$ and $l_x = 0$ otherwise,
and then use a pixelwise loss such as the Mean Squared Error (MSE) or the $L^2$ norm,
where $L^2(l, p) = \sum_{\forall x \in \Omega} |p_x - l_x|^2 \propto \text{MSE}(l, x)$.
The issue with pixelwise losses is that they are not informative of how close two points $x\in \Omega$ and $y\in Y$ are unless $x = y$.
In other words, it is flat for the vast majority of the pixels, making training unfeasible.
This issue is locally mitigated in \cite{tompson2015efficient} by using the MSE loss with Gaussians centered at each $x\in Y$.
By contrast, the WHD in Equation~\eqref{eq:WH} will decrease the closer $x$ is to $y$, making the loss function informative outside of the global minimum.

\section{CNN Architecture And Location Estimation}
\label{sec:cnn}

\begin{figure}[t]
\begin{center}
   \includegraphics[width=0.5\textwidth]{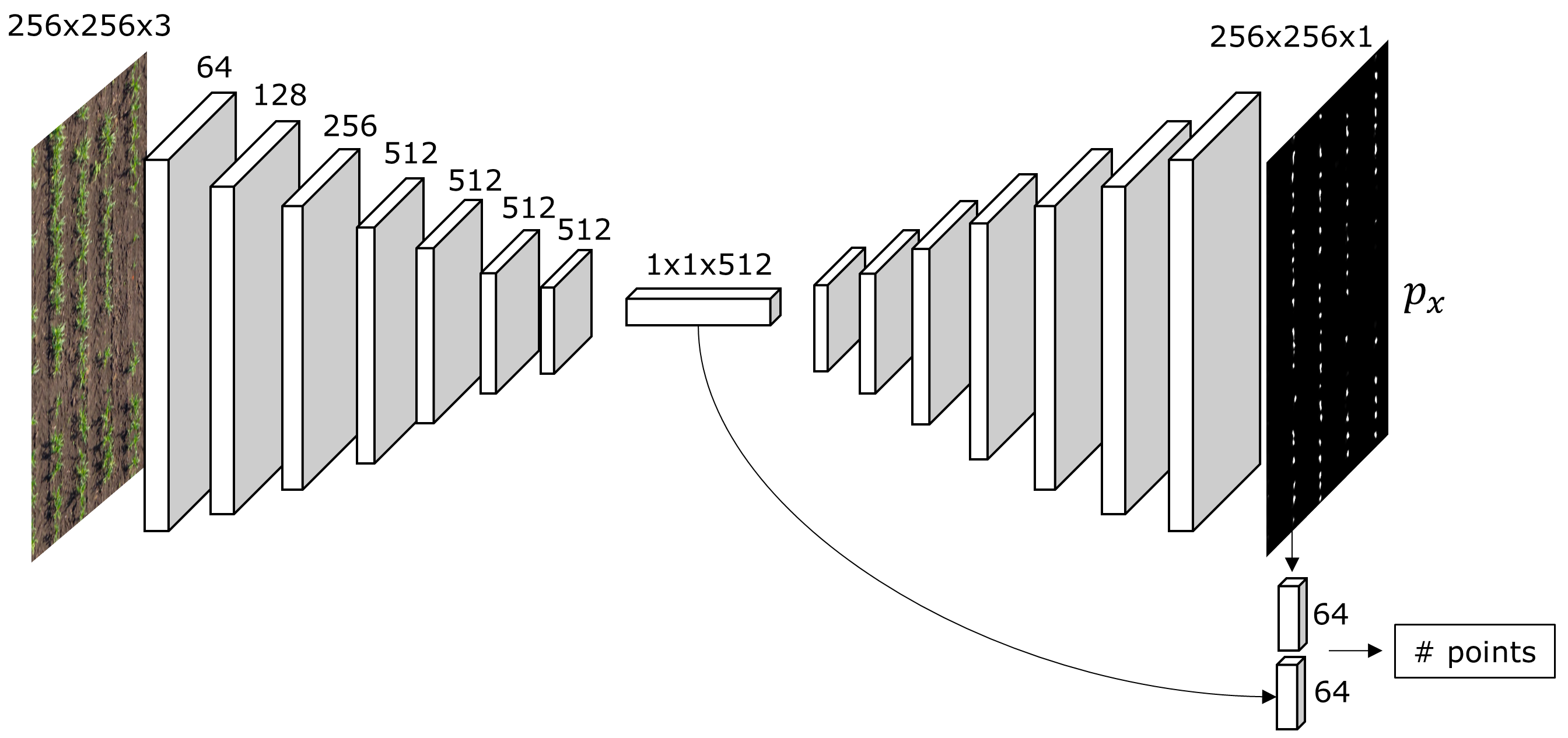}
\end{center}
   \caption{The FCN architecture used for object localization, minimally adapted from the U-Net~\cite{unet} architecture.
            We add a small fully-connected layer that combines the deepest features and the estimated probability map to regress the number of points.}
\label{fig:unet_like}
\end{figure}
In this section, we describe the architecture of the fully convolutional network (FCN) we use, and how we estimate the final object locations.
We want to emphasize that the network design is not a meaningful contribution of this work,
thus we have not made any attempt to optimize it.
Our main contribution is the use of the weighted Hausdorff distance as the loss function.
We adopt the U-Net architecture \cite{unet} and modify it minimally for this task.
Networks similar to U-Net have been proven to be capable of accurately mapping the input image into an output image, when trained in a conditional adversarial network setting~\cite{pix2pix} or when using a carefully tuned loss function~\cite{unet}.
Figure~\ref{fig:unet_like} shows the hourglass design of U-Net.
The residuals connections between each layer in the encoder and its symmetric layer in the decoder are not shown for simplicity.

This FCN has two well differentiated blocks.
The first block follows the typical architecture of a CNN.
It consists of the repeated application of two $3\times3$ convolutions (with padding 1), each followed by a batch normalization operation and a Rectified Linear Unit (ReLU).
After the ReLU, we apply a $2\times 2$ max pooling operation with stride 2 for downsampling.
At each downsampling step we double the number of feature channels, starting with 64 channels and using 512 channels for the last 5 layers.

The second block consists of repeated applications of the following elements: a bilinear upsampling, a concatenation with the feature map from the downsampling block, and two $3 \times 3$ convolutions, each followed by a batch normalization and a ReLU.
The final layer is a convolution layer that maps to the single-channel output of the network, $p$.

To estimate the number of objects in the image, we add a branch that combines the information from the deepest level features and also from the estimated probability map.
This branch combines both features (the $1\times 1 \times 512$ feature vector and the $256 \times 256$ probability map) into a hidden layer, and uses the 128-dimensional feature vector to output a single number.
We then apply a ReLU to ensure the output is positive, and round it to the closest integer to obtain our final estimate of the number of objects, $\hat{C}$.

Although we use this particular network architecture, any other architecture could be used.
The only requirement is that the output images of the network must be of the same size as the input image.
The choice of a FCN arises from the natural interpretation of its output as the weights ($p_x$) in the WHD (Equation~\eqref{eq:WH}).
In previous works \cite{elkhiyari2017,fan2017}, variants of the average Haussdorf distance were successfully used with non-FCN networks that estimate the point set directly.
However, in those cases the size of the estimated set is fixed by the size of the last layer.
To locate an unknown number of objects, the network must be able to estimate a variable number of object locations.
Thus, we could envision the WHD also being used in non-FCN networks as long as the output of the network is used as $p$ in Equation \eqref{eq:WH}.

The training loss we use to train the network is a combination of Equation \eqref{eq:WH} and a smooth $L_1$ loss for the regression of the object count.
The final training loss is
\begin{equation}
  \label{eq:loss}
\begin{split}
    \mathcal{L}(p, Y) = d_{WH}(p, Y) + \mathcal{L}_{\text{reg}}(C - \hat{C}(p)),
 \end{split}
\end{equation}

where $Y$ is the set containing the ground truth coordinates of the objects in the image, $p$ is the output of the network, $C = |Y|$, and $\hat{C}(p)$ is the estimated number of objects.
$\mathcal{L}_{\text{reg}}(\cdot)$ is the regression term, for which we use the smooth $L_1$ or Huber loss \cite{huber1964}, defined as
\begin{equation}
    \mathcal{L}_{\text{reg}}(x) =
    \begin{cases}
        0.5x^2,& \text{for} |x| < 1 \\
        |x| - 0.5,& \text{for} |x| \geq 1 \\
   \end{cases}
\end{equation}
This loss is robust to outliers when the regression error is high, and at the same time is differentiable at the origin.

\begin{figure}[t]
\begin{center}
   \includegraphics[width=0.5\textwidth]{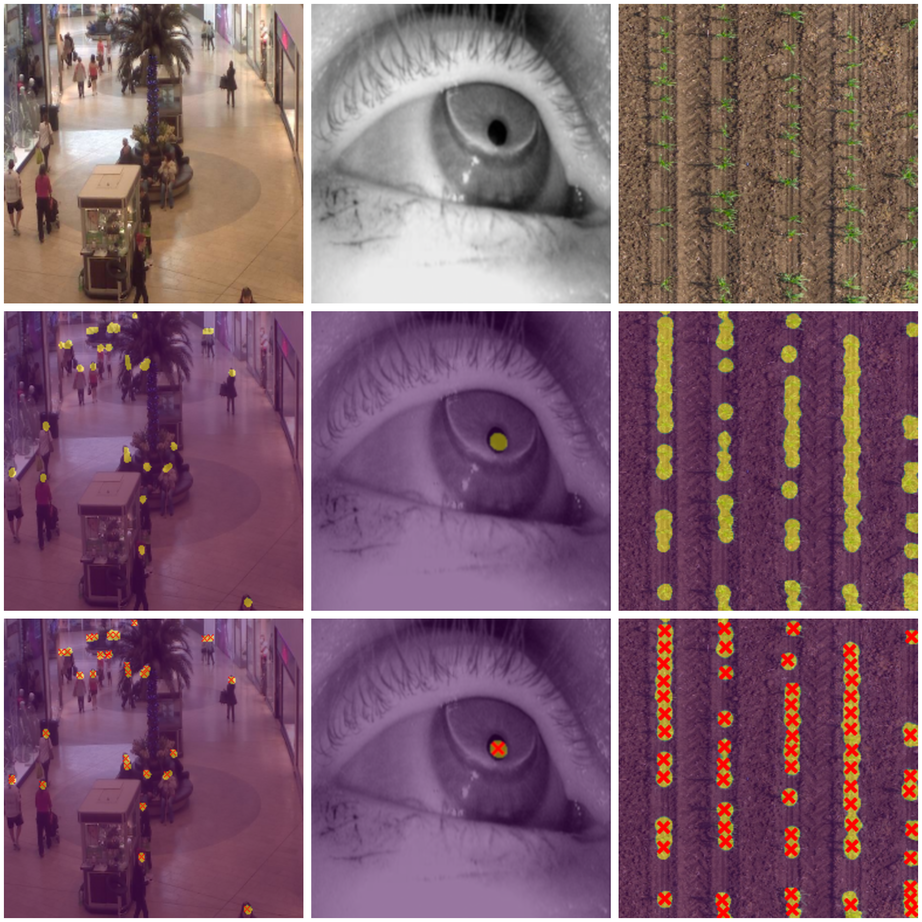}
\end{center}
   \caption{First row: Input image.
            Second row: Output of the network ($p$ in the text) overlaid onto the input image.
            This can be considered a saliency map of object locations.
            Third row: The estimated object locations are marked with a red cross.}
\label{fig:collage_clusters}
\end{figure}

The network outputs a saliency map $p$ indicating with $p_x \in [0,1]$ the confidence that there is an object at pixel $x$.
Figure \ref{fig:collage_clusters} shows $p$ in the second row.
During evaluation, our ultimate goal is to obtain $\hat{Y}$, i. e., the estimate of all object locations.
In order to convert $p$ to $\hat{Y}$, we threshold $p$ to obtain the pixels $T = \left\{ x\in \Omega ~ | ~ p_x > \tau \right\}$.
We can use three different methods to decide which $\tau$ to use:
\begin{enumerate}
    \item Use a constant $\tau$ for all images.
    \item Use Otsu thresholding \cite{otsu} to find an adaptive $\tau$ different for every image.
    \item Use a Beta mixture model-based thresholding (BMM).
          This method fits a mixture of two Beta distributions to the values of $p$ using the algorithm described in \cite{bmm}, and then takes the mean value of the distribution with highest mean as $\tau$.
\end{enumerate}
Figure \ref{fig:collage_clusters} shows in the third row an example of the result of thresholding the saliency map $p$.
Then, we fit a Gaussian mixture model to the points $T$.
This is done using the expectation maximization (EM) \cite{em} algorithm and the estimated number of plants $\hat{C}$.

The means of the fitted Gaussians are considered the final estimate $\hat{Y}$.
The third row of Figure \ref{fig:collage_clusters} shows the estimated object locations with red crosses.
Note that even if the map produced by the FCN is of good quality, i.e., there is a cluster on each object location, EM may not yield the correct object locations if $|\hat{C} - C | > 0.5$.
An example can be observed in the first column of Figure \ref{fig:collage_clusters}, where a single head is erroneously estimated as two heads.

\section{Experimental Results}
We evaluate our method with three datasets.

The first dataset consists of 2,000 images acquired from a surveillance camera in a shopping mall.
It contains annotated locations of the heads of the crowd.
This dataset is publicly available at \url{http://personal.ie.cuhk.edu.hk/~ccloy/downloads\_mall\_dataset.html} \cite{loy2013}.
80\%, 10\% and 10\% of the images were randomly assigned to the training, validation, and testing datasets, respectively.

The second dataset is presented in \cite{fuhl2015} with the roman letter V and publicly available at \url{http://www.ti.uni-tuebingen.de/Pupil-detection.1827.0.html}.
It contains 2,135 images with a single eye, and the goal is to detect the center of the pupil.
It was also randomly split into training, validation and testing datasets as 80/10/10 \%, respectively.

The third dataset consists of aerial images of a crop field taken from a UAV flying at an altitude of 40 m.
The images were stitched together to generate a $6,000 \times 12,000$ orthoimage of $0.75 $ cm/pixel resolution shown in Figure \ref{fig:ortho}.
The location of the center of all plants in this image was groundtruthed, resulting in a total of 15,208 unique plant centers.
This mosaic image was split, and the left 80\% area was used for training, the middle 10\% for validation, and the right 10\% for testing.
Within each region, random image crops were generated.
These random crops have a uniformly distributed height and width between 100 and 600 pixels.
We extracted 50,000 random image crops in the training region, $5,000$ in the validation region, and $5,000$ in the testing region.
Note that some of these crops may highly overlap.
We are making the third dataset publicly available at \url{https://engineering.purdue.edu/~sorghum/dataset-plant-centers-2016}.
We believe this dataset will be valuable for the community, as it poses a challenge due to the high occlusion between plants.

\begin{figure}[t]
\begin{center}
   \includegraphics[width=0.7\linewidth]{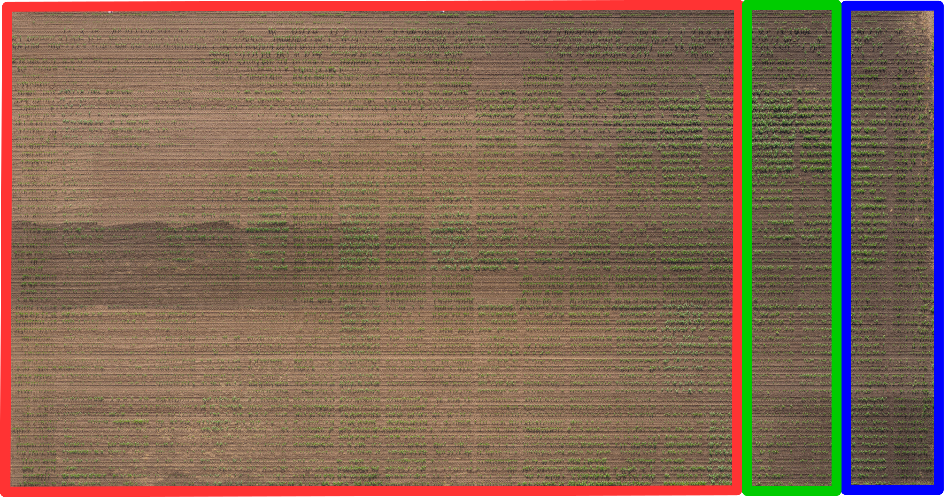}
\end{center}
   \caption{An orthorectified image of a crop field with 15,208 plants. The red region was used for training, the region in green for validation, and the region in blue for testing.}
\label{fig:ortho}
\end{figure}

All the images were resized to $256 \times 256$ because that is the minimum size our architecture allows.
The groundtruthed object locations were also scaled accordingly.
As for data augmentation, we only use random horizontal flip.
For the plant dataset, we also flipped the images vertically.
We set $\alpha=-1$ in Equation~\eqref{eq:genmean}.
We have also experimented with $\alpha=-2$ with no apparent improvement, but we did not attempt to find an optimal value.
We retrain the network for every dataset, i.e., we do not use pretrained weights.
For the mall and plant dataset, we used a batch size of 32 and Adam optimizer~\cite{kingma_2014, amsgrad} with a learning rate of $10^{-4}$ and momentum of 0.9.
For the pupil dataset, we reduced the size of the network by removing the five central layers, we used a batch size of 64, and stochastic gradient descent with a learning rate of $10^{-3}$ and momentum of 0.9.
At the end of each epoch, we evaluate the average Haussdorf distance (AHD) in Equation \eqref{eq:AH} over the validation set,
and select the epoch with lowest AHD on validation.

As metrics, we report Precision, Recall, F-score, AHD, Mean Absolute Error (MAE), Root Mean Squared Error (RMSE), and Mean Absolute Percent Error (MAPE): 
\begin{equation}
    \text{MAE} = \frac{1}{N}\sum_{i=1}^{N}| e_i |,  \quad \text{RMSE} = \sqrt{\frac{1}{N}\sum_{i=1}^{N} \big| e_i \big|^2}
  \label{eq:MAEandRMSE}
\end{equation}
\begin{equation}
    \text{MAPE} = 100 \frac{1}{N} \sum_{\substack{i=1 \\ C_i \neq 0}}^{N}\frac{\big| e_i \big|}{C_i}
  \label{eq:MAPE}
\end{equation}

where $e_i = \hat{C_i} - C_i$, $N$ is the number of images, $C_i$ is the true object count in the $i$-th image, and $\hat{C}_i$ is our estimate.

A true positive is counted if an estimated location is at most at distance $r$ from a ground truth point.
A false positive is counted if an estimated location does not have any ground truth point at a distance at most $r$.
A false negative is counted if a true location does have any estimated location at a distance at most $r$.
Precision is the proportion of our estimated points that are close enough to a true point.
Recall is the proportion of the true points that we are able to detect.
The F-score is the harmonic mean of precision and recall.
Note that one can achieve a precision and recall of 100\% even if we estimate more than one object location per ground truth point.
This would not be an ideal localization.
To take this into account, we also report metrics (MAE, RMSE and MAPE) that indicate if the number of objects is incorrect.
The AHD can be interpreted as the average location error in pixels.

Figure~\ref{fig:precnrec} shows the F-score as a function of $r$.
Note that $r$ is only an evaluation parameter.
It is not needed during training or testing.
MAE, RMSE, and MAPE are shown in Table \ref{tab:results1}.
Note that we are using the same architecture for all tasks, except for the pupil dataset, where we removed intermediate layers.
Also, in the case of the pupil detection, we know that there is always one object in the image.
Thus, regression is not necessary and we can remove the regression term in Equation~\eqref{eq:loss} and fix $\hat{C_i} = C_i = 1 ~ \forall i$.

A naive alternative approach to object localization would be to use generic object detectors such as Faster R-CNN \cite{fasterrcnn}.
One can train these detectors by constructing bounding boxes with fixed size centered at each labeled point.
Then the center of each bounding box can be taken as the estimated location.
We used bounding boxes of size $20 \times 20$ (the approximate average head and pupil size) and anchor sizes of $16\times 16$ and $32 \times 32$.
Note that these parameters may be suboptimal even though they were selected to match the type of object.
The threshold we used for the softmax scores was 0.5 and for the intersection over union it was 0.4,
because they minimize the AHD over the validation set.
We used the VGG-16 architecture~\cite{vgg} and trained it using stochastic gradient descent with learning rate of $10^{-3}$ and momentum of 0.9.
For the pupil dataset, we always selected the bounding box with the highest score.
We experimentally observed that Faster R-CNN struggles with detecting very small objects that are very close to each other.
Tables~\ref{tab:mall}-\ref{tab:plants} show the results of Faster R-CNN results on the mall, pupil, and plant datasets.
Note that the mall and plant datasets, with many small and highly overlapping objects, are the most challenging for Faster R-CNN.
This behaviour is consistent with the observations in \cite{huang2017},
where, all generic object detectors perform very poorly and Faster R-CNN yields a mean Average Precision (mAP) of 5\% in the best case.

We also experimented using mean shift \cite{meanshift} instead of Gaussian mixtures (GM) to detect the local maxima.
However, mean shift is prone to detect multiple local maxima, and GMs are more robust against outliers.
In our experiments, we observed that precision and recall were substantially worse than using GM.
More importantly, using Mean Shift slowed down validation an order of magnitude.
The average time for the Mean Shift algorithm to run on one of our images was 12 seconds, while fitting GM using expectation maximization took around 0.5 seconds, when using the scikit-learn implementations \cite{scikit-learn}.

\begin{figure}[t]
\centering     
\includegraphics[width=0.85\linewidth]{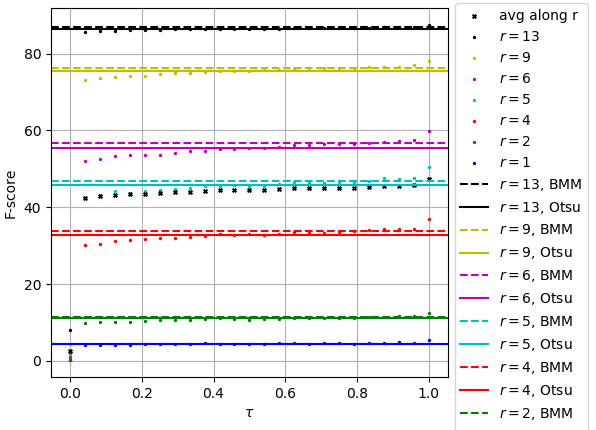}
    \caption{Effect on the F-score of the threshold $\tau$.}
    \label{fig:tau}
\end{figure}

\begin{figure}[t]
\centering     
\includegraphics[width=0.85\linewidth]{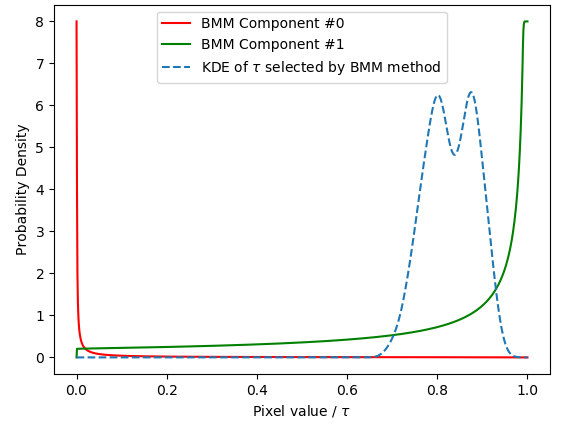}
    \caption{Beta mixture model fitted on the values of $p_x$, and the thresholds $\tau$ used by the BMM method.}
    \label{fig:bmm}
\end{figure}

We also investigated the effect of the parameter $\tau$, and the three methods to select it presented in Section~\ref{sec:cnn}.
One may think that this parameter could be a trade-off between some metrics, and that it should be cross-validated.
In practice, we observed that $\tau$ does not balance precision and recall, thus a precision-recall curve is not meaningful.
Instead, we plot the F-score as a function of $r$ in Figure~\ref{fig:precnrec}.
Also, cross-validating $\tau$ would imply fixing an ``optimal'' value for all images.
Figure~\ref{fig:tau} shows that we can do better with adaptive thresholding methods (Otsu or BMM).
Note that BMM thresholding (dashed lines) always outperforms Otsu (solid lines), and most of fixed $\tau$.
To justify the appropriateness of the BMM method, note that in Figure~\ref{fig:collage_clusters} most of the values in the estimated map are very high or very low.
This makes a Beta distribution a better fit than a Normal distribution (as used in Otsu's method) to model $p_x$.
Figure~\ref{fig:bmm} shows the fitted BMM and a kernel density estimation of the values of $\tau$ adaptively selected by the BMM method.

Lastly, as our method locates and counts objects simultaneously, it could be used as a counting technique.
We also evaluated our technique in the task of crowd counting using the ShanghaiTech Part B dataset presented in \cite{zhang2016},
and achieve a MAE of 19.9.
Even though we do not outperform state of the art methods that are specifically fine-tuned for crowd counting \cite{li2018},
we can achieve comparable results with our generic method.
We expect future improvements such as architectural changes or using transfer learning to further increase the performance.

A PyTorch implementation of the weighted Hausdorff distance loss and trained models are available at \url{https://github.com/javiribera/locating-objects-without-bboxes}.

\begin{figure}[t]
    \subfigure{\includegraphics[width=0.9\linewidth]{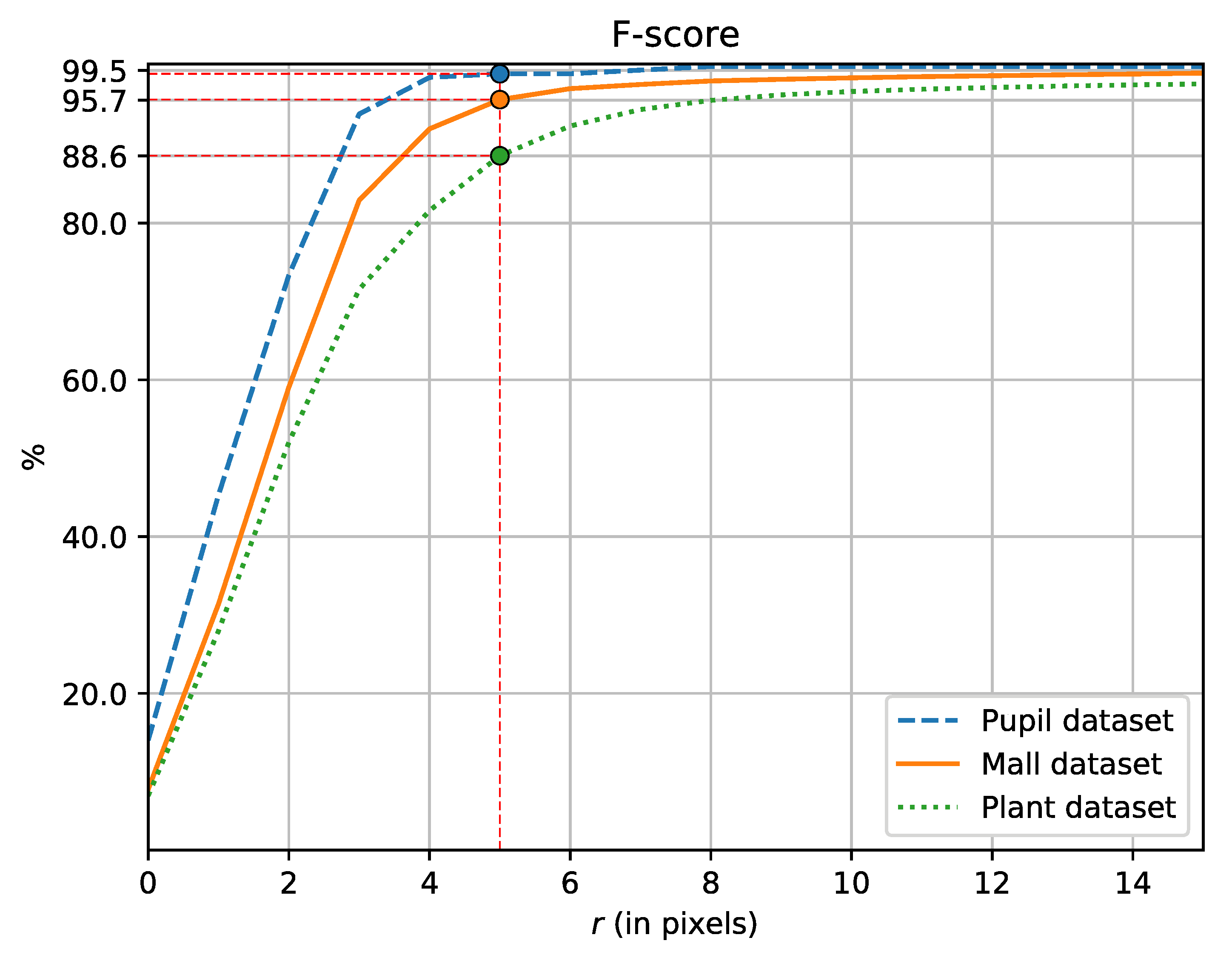}}
    \caption{F-score as a function of $r$,
             the maximum distance between a true and an estimated object location to consider it correct or incorrect.
             A higher $r$ makes correctly locating an object easier.}
    \label{fig:precnrec}
\end{figure}

\begin{table}[t]
\centering
\caption{Results of our method for object localization, using $r=5$.
         Metrics are defined in Equations~\eqref{eq:AH},~\eqref{eq:MAEandRMSE}-\eqref{eq:MAPE}.
         Regression metrics for the pupil dataset are not shown because there is always a single pupil ($\hat{C} = C = 1$).
         Figure \ref{fig:precnrec} shows the F-score for other $r$ values.}
\begin{tabular}{lllll}
\toprule
\textbf{Metric} & \specialcellbold{Mall \\ dataset} & \specialcellbold{Pupil \\ dataset}  & \specialcellbold{Plant \\ dataset} & \textbf{Average}\\
\midrule
Precision & 95.2\%  & 99.5\%  & 88.1\% & 94.4\%  \\
Recall & 96.2\%  & 99.5\% & 89.2\% & 95.0\%  \\
F-score & 95.7\%  & 99.5\% & 88.6\% & 94.6\%  \\
AHD & 4.5 px & 2.5 px   & 7.1 px & 4.7 px \\
MAE & 1.4 & -    & 1.9 & 1.7  \\
RMSE & 1.8 & -     & 2.7 & 2.3  \\
MAPE & 4.4\% & -    & 4.2\% & 4.3 \%  \\
\bottomrule
\end{tabular}
\label{tab:results1}
\end{table}

\begin{table}
\centering
\caption{Head location results using the mall dataset, using $r=5$.}
\begin{tabular}{lll}
\toprule
\textbf{Metric}  & \textbf{Faster-RCNN} &  \textbf{Ours} \\
\midrule
Precision & 81.1\%  & \textbf{95.2 \%} \\
Recall & 76.7\%   & \textbf{96.2 \%} \\
F-score & 78.8 \%   & \textbf{95.7 \%} \\
AHD & 7.6 px & \textbf{4.5 px} \\
MAE & 4.7 & \textbf{1.4}  \\
RMSE & 5.6 & \textbf{1.8} \\
MAPE & 14.8\% & \textbf{4.4 \%} \\
\bottomrule
\end{tabular}
\label{tab:mall}
\end{table}

\begin{table}\centering
\caption{Pupil detection results, using $r=5$.
         Precision and recall are equal because there is only one estimated and one true object.}
\begin{tabular}{llll}
\toprule
    \textbf{Method} & \textbf{Precision} & \textbf{Recall} &  \textbf{AHD} \\
    \midrule
    Swirski \cite{swirski2012} & 77 \% & 77 \%  & - \\
    ExCuSe \cite{fuhl2015} & 77 \% & 77 \%  & - \\
    Faster-RCNN & 99.5 \% & 99.5 \% &  2.7 px  \\
    \textbf{Ours} & \textbf{99.5} \% & \textbf{99.5} \% & \textbf{2.5 px}  \\
\bottomrule
\end{tabular}
\label{tab:pupil}
\end{table}

\begin{table}
\centering
\caption{Plant location results using the plant dataset, using $r=5$.}
\begin{tabular}{lll}
\toprule
\textbf{Metric}  & \textbf{Faster-RCNN} &  \textbf{Ours} \\
\midrule
Precision & 86.6 \%  & \textbf{88.1 \%} \\
Recall & 78.3 \%   & \textbf{89.2 \%} \\
F-score & 82.2 \%   & \textbf{88.6 \%} \\
AHD & 9.0 px & \textbf{7.1 px} \\
MAE & 9.4 & \textbf{1.9}  \\
RMSE & 13.4 & \textbf{2.7} \\
MAPE & 17.7 \% & \textbf{4.2 \%} \\
\bottomrule
\end{tabular}
\label{tab:plants}
\end{table}

\section{Conclusion}
We have presented a loss function for the task of locating objects in images that does not need bounding boxes.
This loss function is a modification of the average Hausdorff distance (AHD), which measures the similarity between two unordered sets of points.
To make the AHD differentiable with respect to the network output, we have considered the certainty of the network when estimating an object location.
The output of the network is a saliency map of object locations and the estimated number of objects.
Our method is not restricted to a maximum number of objects in the image, does not require bounding boxes, and does not use region proposals or sliding windows.
This approach can be used in tasks where bounding boxes are not available, or the small size of objects makes the labeling of bounding boxes impractical.
We have evaluated our approach with three different datasets, and outperform generic object detectors and task-specific techniques.
Future work will include developing a multi-class object location estimator in a single network,
and evaluating more modern CNN architectures.

\vspace{1em}

{\small \textbf{Acknowledgements:}
    This work was funded by the Advanced Research Projects Agency-Energy (ARPA-E),
    U.S. Department of Energy, under Award Number DE-AR0000593.
    The views and opinions of the authors
    expressed herein do not necessarily reflect those of the
    U.S. Government or any agency thereof.
    We thank Professor Ayman Habib for the orthophotos used in this paper.
    Contact information: Edward J. Delp, \texttt{\href{mailto:ace@ecn.purdue.edu}{ace@ecn.purdue.edu}}
}

\clearpage

\bibliographystyle{ieee}
\bibliography{ref}

\renewcommand{\thesection}{\Alph{section}}
\section*{Appendix: Ablation Of Terms}
In Section~\ref{sec:whd}, we made the following claim:
\begin{claim}
Both terms of the Weighted Hausdorff Distance (WHD) are necessary.
If the first term is removed, then
$p_x = 1 \enspace \forall x \in \Omega$ is the solution that minimizes the WHD.
If the second term is removed, then the trivial solution is $p_x = 0 \enspace \forall x \in \Omega$.
\end{claim}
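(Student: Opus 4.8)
The plan is to handle the two cases separately; in each I exhibit the claimed map as a global minimizer of the one surviving term, the optimization being over the box $p\in[0,1]^{|\Omega|}$ dictated by $p_x\in[0,1]$. Write $d_{\text{WH}}(p,Y)=T_1(p)+T_2(p)$, where $T_1$ is the first term of Equation~\eqref{eq:WH}, namely $T_1(p)=(\mathcal{S}+\epsilon)^{-1}\sum_{x\in\Omega}p_x\,g(x)$ with $g(x):=\min_{y\in Y}d(x,y)\ge 0$, and $T_2$ is the generalized-mean term. I will lean on two elementary facts: that $f_y(x):=p_xd(x,y)+(1-p_x)d_{max}$ is affine and non-increasing in the scalar $p_x$, and that the generalized mean $M_\alpha[\cdot]$ of Equation~\eqref{eq:genmean} is non-decreasing in each of its arguments.

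For the first case I remove $T_1$ and minimize $T_2$. Because $d(x,y)\le d_{max}$ by Equation~\eqref{eq:dmax}, I rewrite $f_y(x)=d_{max}-p_x\,(d_{max}-d(x,y))$ with a nonnegative coefficient, so $f_y(x)$ is minimized over $p_x\in[0,1]$ at $p_x=1$, where it equals $d(x,y)$. The decisive point is that this minimizing choice is the same for every ground-truth $y$, so setting $p_x=1$ for all $x$ simultaneously minimizes every coordinate of the vector fed to $M_\alpha[\cdot]$ in each of the $|Y|$ summands of $T_2$; the argument vector at $p\equiv 1$ is componentwise below the one at any other $p$. Invoking coordinatewise monotonicity of $M_\alpha$, each summand, and hence their average $T_2$, is minimized at $p\equiv 1$, establishing that $p_x=1\,\forall x$ is a global minimizer of $T_2$.

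For the second case I remove $T_2$ and minimize $T_1$. Here every summand $p_x\,g(x)$ is nonnegative and the denominator $\mathcal{S}+\epsilon>0$ is strictly positive (this is precisely the stabilizing role of $\epsilon$ in Equation~\eqref{eq:denom}), so $T_1(p)\ge 0$ for all $p$. Evaluating at $p\equiv 0$ makes the numerator vanish while the denominator stays equal to $\epsilon>0$, so $T_1=0$ there. Thus $T_1$ attains its lower bound at $p_x=0\,\forall x$, which is therefore a global minimizer.

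The routine parts are the affine-in-$p_x$ manipulation and the nonnegativity bound; the one step I expect to require care is the monotonicity of $M_\alpha$ in the regime of interest $\alpha<0$, which I would justify by the derivative $\partial M_\alpha[a]/\partial a_j=(\tfrac1n\sum_i a_i^\alpha)^{1/\alpha-1}\tfrac1n a_j^{\alpha-1}>0$. This route is preferable to directly computing the minimal value of $T_2$, because at the optimum $p\equiv 1$ one argument equals $f_y(y)=d(y,y)=0$ and, with $\alpha<0$, the expression $0^\alpha$ is singular; the monotonicity argument reaches the same conclusion without ever evaluating $M_\alpha$ at a zero argument, so it is the cleaner and more robust line to pursue. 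I would also note that neither minimizer is unique --- any $p$ with $p_y=1$ on $Y$ minimizes $T_2$, and any $p$ supported on $Y$ minimizes $T_1$ --- but the all-ones and all-zeros maps are exactly the degenerate ``trivial'' solutions the claim is meant to rule out, so exhibiting them as minimizers suffices.
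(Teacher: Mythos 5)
Your proof is correct and takes essentially the same route as the paper's: the same decomposition into the two terms, the same pointwise bound $d(x,y)\le p_x\,d(x,y)+(1-p_x)\,d_{max}$ combined with coordinatewise monotonicity of $M_\alpha$ for $\alpha<0$ (which the paper establishes inline by the sign-reversing power-$\alpha$ chain and you establish via the derivative), and the same observation that the first term is nonnegative and vanishes at $p\equiv 0$. If anything, yours is slightly more careful than the paper's version, which writes $p_x d_{max}+(1-p_x)d_{max}$ where $p_x d(x,y)+(1-p_x)d_{max}$ is clearly intended and which silently raises a zero distance (at $x=y$) to a negative power, a boundary issue you explicitly flag and route around.
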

\begin{proof}
    If the first term is removed and $p_x = 1 \enspace \forall x \in~\Omega$, then Equation~(5) reduces to
\begin{equation*}
    d_{\text{WH}}(p, Y) \big|_{p=1} = 
    \frac{1}{|Y|} \sum_{y\in Y} \underset{x\in \Omega}{M_\alpha} \left[\, d(x, y) \, \right] .
\end{equation*}
From the definition in Equation~(2), $\forall x,y \in \Omega$,
\begin{equation*}
    d(x, y) \leq  d_{max}.
\end{equation*}
For any $p_x \in [0, 1]$ and $\alpha < 0$,
\begin{equation*}
  \begin{split}
      (1-p_x) d(x, y) & \leq  (1-p_x) d_{max} \\
      d(x, y) & \leq  p_x d_{max} + (1-p_x) d_{max} \\
      d(x, y)^\alpha & \geq  \left[p_x d_{max} + (1-p_x) d_{max}\right]^\alpha \\
      \frac{1}{|\Omega|} \sum_{x \in \Omega} d(x, y)^\alpha & \geq  \frac{1}{|\Omega|} \sum_{x \in \Omega} \left[p_x d_{max} + (1-p_x) d_{max}\right]^\alpha \\
      \left[ \frac{1}{|\Omega|} \sum_{x \in \Omega} d(x, y)^\alpha \right]^\frac{1}{\alpha} & \leq  \left[ \frac{1}{|\Omega|} \sum_{x \in \Omega} \left[p_x d_{max} + (1-p_x) d_{max}\right]^\alpha \right]^\frac{1}{\alpha} \\
      \underset{x\in \Omega}{M_\alpha} \left[\, d(x, y) \, \right] & \leq  \underset{x\in \Omega}{M_\alpha} \left[\, p_x d_{max} + (1-p_x) d_{max} \, \right] \\
      \frac{1}{|Y|} \sum_{y \in Y} \underset{x\in \Omega}{M_\alpha} \left[\, d(x, y) \, \right] & \leq  \frac{1}{Y} \sum_{y \in Y} \underset{x\in \Omega}{M_\alpha} \left[\, p_x d_{max} + (1-p_x) d_{max} \, \right] \\
      d_{\text{WH}}(p, Y) \big|_{p=1} & \leq d_{\text{WH}}(p, Y) .
  \end{split}
\end{equation*}
Note that $d_{\text{WH}}(p, Y) \big|_{p=1} > 0$ if $\alpha > -\infty$, but the proof holds for any $\alpha < 0$.

If the second term is removed and $p_x = 0 \enspace \forall x \in \Omega$, then Equation~(5) reduces to
\begin{equation*}
    d_{\text{WH}}(p, Y) \big|_{p=0} = 
    \frac{1}{\mathcal{S}+ \epsilon} \sum_{x\in \Omega} p_x \min_{y\in Y} d(x, y) \big|_{p=0} 
    = \frac{1}{0 + \epsilon} 0 = 0 .
\end{equation*}
\end{proof}

\end{document}